\pdfoutput=1 %
\documentclass[wcp]{jmlr} %

\usepackage{url}            %
\usepackage{booktabs}       %
\usepackage{amsmath, amsfonts, amssymb} %
\usepackage{mathtools}      %
\usepackage{nicefrac}       %
\usepackage{microtype}      %
\usepackage{graphicx}       %
\graphicspath{{figures/}}   %
\usepackage{doi}            %
\usepackage{subfiles}       %
\usepackage{listings}       %
\usepackage[normalem]{ulem} %
\usepackage[]{todonotes}    %
\usepackage{orcidlink}
\usepackage{xparse}

\usepackage{xcolor}      %

\lstdefinestyle{python}{
  language=Python,
  basicstyle=\ttfamily\small,
  keywordstyle=\color{blue},
  commentstyle=\color{gray}\itshape,
  stringstyle=\color{orange},
  showstringspaces=false,
  breaklines=true,
  frame=single,
  framesep=4pt,
}

\lstset{style=python}

\newcommand{\Ex}{\mathbb{E}}
\newcommand{\var}{\operatorname{Var}}

\newcommand{\vv}[1]{\boldsymbol{#1}}
\newcommand{\mm}[1]{\mathrm{#1}}
\newcommand{\mmmean}[1]{\bar{\mathrm{#1}}}
\newcommand{\mmdev}[1]{\breve{\mathrm{#1}}}
\newcommand{\rv}[1]{\mathsf{#1}}
\newcommand{\vrv}[1]{\vv{\rv{#1}}}

\newcommand{\op}[1]{\mathcal{#1}}

\newcommand{\disteq}{\stackrel{\mathrm{d}}{=}}
\newcommand{\Normal}{\mathcal{N}}

\newcommand{\gvn}{\mid}
\renewcommand{\Pr}{\mathbb{P}}

\NewDocumentCommand{\Law}{o}{%
  \mu\IfValueT{#1}{_{#1}}%
}

\NewDocumentCommand{\ELaw}{o}{%
  \widehat{\mu}\IfValueT{#1}{_{#1}}%
}

\jmlrproceedings{arXiv Preprint}{Submitted to arXiv on \today}

\title[Ensemble Kalman Update]{The Ensemble Kalman Update is an Empirical Matheron Update}

\author{\Name{Dan MacKinlay\,\orcidlink{0000-0001-6077-2684}}\\
  \addr {CSIRO's Data61}
}

\begin{document}

\maketitle

\begin{abstract}
The Ensemble Kalman Filter (EnKF) is a widely used method for data assimilation in high-dimensional systems, with an ensemble update step equivalent to an empirical version of the Matheron update popular in Gaussian process regression—a connection that links half a century of data-assimilation engineering to modern path-wise GP sampling.

This paper provides a compact introduction to this simple but under-exploited connection, with necessary definitions accessible to all fields involved.

Source code is available at \url{https://github.com/danmackinlay/paper_matheron_equals_enkf}.

\keywords{Ensemble Kalman Filter, Matheron Update, Gaussian Process, Data Assimilation, Geostatistics}}

\end{abstract}

\section{Introduction}

The Ensemble Kalman Filter (EnKF)~\citep{Evensen2003Ensemble,Evensen2009Data} is a cornerstone method that evolves ensembles of state vectors through model dynamics and updates them using observational data. The Matheron update provides a sample-based method for conditioning Gaussian random variables on observations~\citep{Doucet2010Note,Wilson2020Efficiently,Wilson2021Pathwise}, well-established in geostatistics but with little-known connections to ensemble data assimilation.

We establish that the ensemble update step in the EnKF is equivalent to an empirical Matheron update by putting them on a common probabilistic footing. This connection provides an alternative foundation for the EnKF and suggests improvements by leveraging computational optimizations from both communities.

\subsection{Historical Context and Related Work}\label{sec:history}

Although the affine residual update first appeared in geostatistics under the banner of \emph{conditioning by kriging}~\citep{Chiles2018Fifty}, it was rediscovered many times:
\begin{itemize}
    \item \textbf{Optimal interpolation (OI).}  Early numerical-weather-prediction (NWP) systems wrote the analysis as
    $x_a = x_b + K(y - Hx_b)$ with a climatological Kalman gain; this is exactly the Matheron rule with fixed covariances~\citep{Hunt2007Efficient}.
    \item \textbf{Stochastic EnKF.}  \citet{Evensen2003Ensemble} replaced the static climatological covariances of OI with an empirical ensemble, creating the now-ubiquitous EnKF.
    \item \textbf{Fast conditional simulation.}  \citet{Doucet2010Note} noticed that one can obtain conditional Gaussian draws \emph{without} a Cholesky factorisation by adding a linear residual—again the same formula.
    \item \textbf{Pathwise GP sampling.}  The machine-learning community re-invented the idea as \emph{path-wise conditioning} for scalable Gaussian-process (GP) posterior draws~\citep{Wilson2020Efficiently,Wilson2021Pathwise,Borovitskiy2023Matern}.
    \item \textbf{Reservoir inversion and EnRML.}  Iterative ensemble smoothers such as EnRML exploit the very same affine map in the context of PDE-constrained Bayesian inversion~\citep{Chen2012Ensemble}.
    \item \textbf{Theory of ensemble inversion.}  Convergence proofs for ensemble inversion~\citep{Schillings2017Analysis} and extensions via transport maps~\citep{Spantini2022Coupling} all start from the linear Matheron/EnKF identity.
\end{itemize}

This equivalence forms the algebraic core of at least six research communities, turning a patchwork of field-specific heuristics into transferable technology.

\subsection{Notation}

We write random variates sans serif, $\vrv{x}$.
Equality in distribution is $\disteq$.
The law, or measure, of a random variate $\vrv{x}$ is denoted $\Law[\vrv{x}]$,
so that $\left(\Law[\vrv{x}]=\Law[\vrv{y}]\right) \Rightarrow \left(\vrv{x}\disteq\vrv{y}\right)$.
Mnemonically, samples drawn from the $\Law[\vrv{x}]$ are written with a serif $\vv{x}\sim\Law[\vrv[x]]$.
We use a hat to denote empirical estimates, e.g. \(\ELaw[\mm{X}]\) is the empirical law induced by the sample matrix \(\mm{X}\).
When there is no ambiguity we suppress the sample matrix, writing simply \(\widehat{\Law}\).
We follow standard measure notation; Appendix~\sectionref{sec:densities-please} provides a density-based translation.

\section{Matheron Update}

The Matheron update is a technique for sampling from the conditional distribution of a Gaussian random variable given observations, without explicitly computing the posterior covariance \citep{Doucet2010Note,Wilson2020Efficiently,Wilson2021Pathwise}.

\begin{lemma}[Matheron Update]
Given a jointly Gaussian vector
\begin{align}
    \begin{bmatrix} \vrv{x} \\ \vrv{y} \end{bmatrix}
    &\sim \Normal\left(\begin{bmatrix} \vv{m}_{\vrv{x}} \\ \vv{m}_{\vrv{y}} \end{bmatrix}, \begin{bmatrix} \mm{C}_{\vrv{xx}} & \mm{C}_{\vrv{xy}} \\ \mm{C}_{\vrv{yx}} & \mm{C}_{\vrv{yy}} \end{bmatrix}\right), \label{eq:joint-gaussian}
\end{align}
the conditional $\vrv{x} | \vrv{y} {=} \vv{y}^*$ is equal in distribution to
\begin{align}
    \left(\vrv{x} \gvn \vrv{y} {=} \vv{y}^*\right)
    &\disteq \vrv{x} + \mm{C}_{\vrv{xy}} \mm{C}_{\vrv{yy}}^{-1} \left( \vv{y}^* - \vrv{y} \right).
    \label{eq:matheron-update}
\end{align}
\end{lemma}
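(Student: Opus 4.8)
The plan is to exhibit both the conditional $\left(\vrv{x}\gvn\vrv{y}{=}\vv{y}^*\right)$ and the proposed update as affine images of a single Gaussian \emph{residual}, so that their equality in distribution collapses to a one-line algebraic identity rather than a covariance bookkeeping exercise. Write $\mm{K} \coloneqq \mm{C}_{\vrv{xy}}\mm{C}_{\vrv{yy}}^{-1}$ for the gain (assuming, as is implicit in \eqref{eq:joint-gaussian}, that $\mm{C}_{\vrv{yy}}$ is invertible) and define $\vrv{r} \coloneqq \vrv{x} - \mm{K}\vrv{y}$. Since $(\vrv{r},\vrv{y})$ is a fixed linear image of $(\vrv{x},\vrv{y})$, it is jointly Gaussian.

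First I would compute the cross-covariance of $\vrv{r}$ and $\vrv{y}$:
\begin{align}
\cov(\vrv{r},\vrv{y}) &= \cov(\vrv{x},\vrv{y}) - \mm{K}\,\cov(\vrv{y},\vrv{y}) = \mm{C}_{\vrv{xy}} - \mm{C}_{\vrv{xy}}\mm{C}_{\vrv{yy}}^{-1}\mm{C}_{\vrv{yy}} = 0,
\end{align}
which uses only $\cov(\vrv{y},\vrv{y})=\mm{C}_{\vrv{yy}}$ and the definition of $\mm{K}$. Because $(\vrv{r},\vrv{y})$ is jointly Gaussian, vanishing cross-covariance upgrades to statistical independence, $\vrv{r}\indep\vrv{y}$. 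This is the only structural fact the argument needs.

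Next I would rewrite each side in terms of $\vrv{r}$. On the one hand $\vrv{x} = \vrv{r} + \mm{K}\vrv{y}$, so conditioning on $\vrv{y}{=}\vv{y}^*$ and invoking $\vrv{r}\indep\vrv{y}$ gives $\left(\vrv{x}\gvn\vrv{y}{=}\vv{y}^*\right)\disteq \vrv{r} + \mm{K}\vv{y}^*$, the right-hand side now an unconditional law. On the other hand the proposed update telescopes:
\begin{align}
\vrv{x} + \mm{C}_{\vrv{xy}}\mm{C}_{\vrv{yy}}^{-1}\left(\vv{y}^* - \vrv{y}\right) &= \left(\vrv{x} - \mm{K}\vrv{y}\right) + \mm{K}\vv{y}^* = \vrv{r} + \mm{K}\vv{y}^*,
\end{align}
which is literally the same random variable. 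Comparing the two displays yields \eqref{eq:matheron-update}.

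The calculation itself is routine; the one point deserving care is the meaning of conditioning on the measure-zero event $\{\vrv{y}{=}\vv{y}^*\}$. The independence $\vrv{r}\indep\vrv{y}$ is exactly what makes the regular conditional distribution of $\vrv{x}$ given $\vrv{y}{=}\vv{y}^*$ well defined and equal to $\Law[\vrv{r}]$ translated by $\mm{K}\vv{y}^*$ for $\Law[\vrv{y}]$-almost every $\vv{y}^*$ (and, under the Gaussian assumption, for every $\vv{y}^*$). An alternative route — set $\vrv{z}\coloneqq \vrv{x}+\mm{K}(\vv{y}^*-\vrv{y})$, note it is Gaussian because affine in $(\vrv{x},\vrv{y})$, and check that $\Ex[\vrv{z}]=\vv{m}_{\vrv{x}}+\mm{K}(\vv{y}^*-\vv{m}_{\vrv{y}})$ and $\var(\vrv{z})=\mm{C}_{\vrv{xx}}-\mm{K}\mm{C}_{\vrv{yx}}$ match the textbook conditional moments — is equally short but presupposes the Gaussian conditioning formula, which the residual argument instead reproves for free. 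I would present the residual version as the main proof and relegate the moment-matching version to a remark.
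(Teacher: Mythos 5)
Your proof is correct, but it follows a genuinely different route from the one in the paper. The paper's proof is exactly the ``alternative route'' you relegate to a remark: it quotes the standard Gaussian conditioning formulae for $\vv{m}_{\vrv{x}\gvn\vrv{y}}$ and $\mm{C}_{\vrv{x}\gvn\vrv{y}}$, computes the mean and variance of $\vrv{x} + \mm{C}_{\vrv{xy}}\mm{C}_{\vrv{yy}}^{-1}(\vv{y}^*-\vrv{y})$ by expanding the variance of a sum, and concludes by Gaussian moment matching. Your residual argument instead sets $\vrv{r} = \vrv{x}-\mm{K}\vrv{y}$, observes $\cov(\vrv{r},\vrv{y})=0$ hence $\vrv{r}\indep\vrv{y}$ by joint Gaussianity, and identifies both $\left(\vrv{x}\gvn\vrv{y}{=}\vv{y}^*\right)$ and the update with $\vrv{r}+\mm{K}\vv{y}^*$. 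What your version buys: it does not presuppose the conditional-Gaussian formulae (it derives them as a byproduct), it avoids the variance bookkeeping with the cross terms that the paper must carry, and it addresses the measure-zero conditioning on $\{\vrv{y}{=}\vv{y}^*\}$ cleanly via independence, which the paper glosses over by citing the textbook result. What the paper's version buys: it is shorter given that the conditioning formulae are standard and already needed later (equations for the Kalman gain reuse them), so the moment-matching proof integrates more tightly with the rest of the exposition. Both arguments need $\mm{C}_{\vrv{yy}}$ invertible, which you state explicitly and the paper leaves implicit; that is a small point in your favour.
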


\begin{proof}
    A standard property of the Gaussian \citep[e.g.][]{Petersen2012Matrix} is that the conditional distribution of a Gaussian variate  $\vrv{x}$ given $\vrv{y} = \vv{y}^*$ defined as in \eqref{eq:joint-gaussian} is again Gaussian
    \begin{align}
        \left(\vrv{x} \gvn \vrv{y} {=} \vv{y}^*\right)
        \sim\Normal(\vv{m}_{\vrv{x}\gvn\vrv{y}}, \mm{C}_{\vrv{x}\gvn\vrv{y}})\label{eq:conditional-gaussian}
    \end{align}
    with moments
    \begin{align}
        \vv{m}_{\vrv{x}\gvn\vrv{y}}
            &=\Ex [\vrv{x} \gvn \vrv{y} {=} \vv{y}^*] \\
            &= \vv{m}_{\vrv{x}} + \mm{C}_{\vrv{xy}} \mm{C}_{\vrv{yy}}^{-1} \left( \vv{y}^* - \vv{m}_{\vrv{y}} \right), \label{eq:conditional-mean}\\
        \mm{C}_{\vrv{x}\gvn\vrv{y}}
            &= \var \left(\vrv{x} \gvn \vrv{y} {=} \vv{y}^*\right) \\
            &= \mm{C}_{\vrv{xx}} - \mm{C}_{\vrv{xy}} \mm{C}_{\vrv{yy}}^{-1} \mm{C}_{\vrv{yx}}. \label{eq:conditional-cov}
    \end{align}
Taking moments of the right hand side of \eqref{eq:matheron-update}
\begin{align}
\Ex\left[\vrv{x}+\mm{C}_{\vrv{xy}} \mm{C}_{\vrv{yy}}^{-1}(\vv{y}^*-\vrv{y})\right]
&=\vv{m}_{\vrv{x}} +\mm{C}_{\vrv{xy}}\mm{C}_{\vrv{yy}}^{-1}(\vv{y}^*-\vv{m}_{\vrv{y}})\\
&=\vv{m}_{\vrv{x}\gvn\vrv{y}}\\
\var\left[\vrv{x}+\mm{C}_{\vrv{xy}} \mm{C}_{\vrv{yy}}^{-1}(\vv{y}^*-\vrv{y})\right]
&=
    \var[\vrv{x}]+\var[\mm{C}_{\vrv{xy}} \mm{C}_{\vrv{yy}}^{-1}(\vv{y}^*-\vrv{y})] \nonumber \\
    &\hspace{2em} +\var(\vrv{x},\mm{C}_{\vrv{xy}} \mm{C}_{\vrv{yy}}^{-1}(\vv{y}^*-\vrv{y}))\nonumber \\
    &\hspace{2em} +\var(\vrv{x},\mm{C}_{\vrv{xy}} \mm{C}_{\vrv{yy}}^{-1}(\vv{y}^*-\vrv{y}))^{\top}\\
&=\mm{C}_{\vrv{x}\vrv{x}} +\mm{C}_{\vrv{xy}} \mm{C}_{\vrv{yy}}^{-1}\mm{C}_{\vrv{yy}} \mm{C}_{\vrv{yy}}^{-1}\mm{C}_{\vrv{yx}}
-  2\mm{C}_{\vrv{xy}} \mm{C}_{\vrv{yy}}^{-1}\mm{C}_{\vrv{yx}}\\
&=\mm{C}_{\vrv{x}\vrv{x}} -\mm{C}_{\vrv{xy}} \mm{C}_{\vrv{yy}}^{-1}\mm{C}_{\vrv{yx}} =\mm{C}_{\vrv{x}\gvn\vrv{y}}
\end{align}
we see that both first and second moments match.
Since $\Law\left\{\vrv{x} \gvn \vrv{y} {=} \vv{y}^*\right\}$ and
$\Normal(\vv{m}_{\vrv{x}\gvn\vrv{y}}, \mm{C}_{\vrv{x}\gvn\vrv{y}})$ are Gaussian distributions with the same moments, they define the same distribution.
\end{proof}

This \emph{pathwise} approach to conditioning Gaussian variates has gained currency in machine learning as a tool for sampling and inference in Gaussian processes \citep{Wilson2020Efficiently,Wilson2021Pathwise}, notably in generalising to challenging domains such as Riemannian manifolds~\citep{Borovitskiy2023Matern}.

\section{Kalman Filter}
We begin by recalling that in state filtering the goal is to update our estimate of the system state when a new observation becomes available. In a general filtering problem the objective is to form the posterior distribution \(\Law[\vrv{x}_{t+1} \gvn \vrv{x}_t, \vv{y}^*]\) from the prior \(\Law[\vrv{x}_{t+1} \gvn \vrv{x}_t]\) by incorporating the new information in the observation \(\vv{y}^*\).

We assume a known observation operator $\op{H}$ such that observations $\vrv{y}$ are a priori related to state $\vrv{x}$ by $\vrv{y}=\op{H}(\vrv{x})$; Thus there exists a joint law for the prior random vector
\begin{align}
    \begin{bmatrix}
        \vrv{x}\\
        \vrv{y}
    \end{bmatrix} &= \begin{bmatrix}
        \vrv{x}\\
        \op{H}(\vrv{x})
    \end{bmatrix}\label{eq:joint-law}
\end{align}
which is determined by the prior state distribution $\Law[\vrv{x}]$ and the observation operator $\op{H}$.
The  \emph {analysis} step, in state filtering parlance, is the update at time $t$ of  \(\Law[\vrv{x}_{t}]\), into the posterior \( \Law[\vrv{x}_t \gvn (\vrv{y}_t{=}\vv{y}_t^*)]\)
i.e. incorporating the likelihood of the observation $\vv{y}_t^*=\vv{y}_t$.
Although  recursive updating in $t$ is the focus of the classic Kalman filter, in this work we are concerned only with the observational update step.
Hereafter we suppress the time index $t$, and consider an individual analysis update.

Suppose that the state and observation noise are independent, and all variates are defined over a finite dimensional real vector space
$\vv{x}\in\mathbb{R}^{D_{\vrv{x}}}, \vv{y}\in \mathbb{R}^{D_{\vrv{y}}}.$
Suppose, moreover, that at the update step our prior belief about $\vrv{x}$ is Gaussian mean \(\vv{m}_{\vrv{x}}\) and covariance \(\mm{C}_{\vrv{xx}}\), that the observation noise is centred Gaussian with covariance \(\mm{R}\), and the observation operator is linear with matrix \(\mm{H}\), so that the observation is related to the state via
\[
\vv{y} = \mm{H}\,\vv{x} + \vv{\varepsilon}, \quad \vv{\varepsilon} \sim \mathcal{N}(0,\mm{R}).
\]
Then the joint distribution of the prior state and observation is Gaussian and \eqref{eq:joint-law} implies that it is
\begin{align}
\begin{bmatrix}
\vrv{x} \\
\vrv{y}
\end{bmatrix}
&\sim \mathcal{N}\!\left(
    \begin{bmatrix}
    \vv{m}_{\vrv{x}} \\
    \vv{m}_{\vrv{y}}
    \end{bmatrix},
    \begin{bmatrix}
    \mm{C}_{\vrv{xx}} & \mm{C}_{\vrv{xy}} \\
    \mm{C}_{\vrv{yx}} & \mm{C}_{\vrv{yy}}
    \end{bmatrix}
    \right)\\
&=\mathcal{N}\!\left(
    \begin{bmatrix}
    \vv{m}_{\vrv{x}} \\
    \mm{H}\,\vv{m}_{\vrv{x}}
    \end{bmatrix},
    \begin{bmatrix}
    \mm{C}_{\vrv{xx}} & \mm{C}_{\vrv{xx}}\,\mm{H}^\top \\
    \mm{H}\,\mm{C}_{\vrv{xx}} & \mm{H}\,\mm{C}_{\vrv{xx}}\,\mm{H}^\top + \mm{R}
    \end{bmatrix}
    \right)
\end{align}
When an observation \(\vv{y}^*\) is obtained, we apply the formulae
\eqref{eq:conditional-gaussian}, \eqref{eq:conditional-mean}, and \eqref{eq:conditional-cov} to calculate
\begin{align}
\left(\vrv{x} \gvn \vrv{y} {=} \vv{y}^*\right)
&\sim\Normal(\vv{m}_{\vrv{x}\gvn\vrv{y}}, \mm{C}_{\vrv{x}\gvn\vrv{y}})\\
\vv{m}_{\vrv{x}\gvn \vv{y}}
&= \vv{m}_{\vrv{x}} + \mm{K} \left(\vv{y}^* - \vv{m}_{\vrv{y}}\right),\\
&= \vv{m}_{\vrv{x}} + \mm{K} \left(\vv{y}^* - \mm{H}\,\vv{m}_{\vrv{x}}\right),\\
\mm{C}_{\vrv{x}\gvn\vrv{y}}
&= \mm{C}_{\vrv{xx}} - \mm{K}\mm{C}_{\vrv{yx}}\\
&= \mm{C}_{\vrv{xx}} - \mm{K}\,\mm{H}\,\mm{C}_{\vrv{xx}}.
\end{align}
where
\begin{align}
\mm{K}
&\coloneq \mm{C}_{\vrv{xy}}\,\mm{H}^\top \left(\mm{C}_{\vrv{yy}}\right)^{-1},\label{eq:kalman-gain}\\
&\coloneq \mm{C}_{\vrv{xx}}\,\mm{H}^\top \left(\mm{H}\,\mm{C}_{\vrv{xx}}\,\mm{H}^\top + \mm{R}\right)^{-1},
\end{align}
is the \emph{Kalman gain}.
Hereafter we consider a constant diagonal \(\mm{R}\) for simplicity, so that \(\mm{R}=\rho^2\mm{I}_{D_{\vrv{y}}}\).

\section{Ensemble Kalman Filter}

In high-dimensional or nonlinear settings, directly computing these posterior updates is often intractable.
The Ensemble Kalman Filter (EnKF) addresses this issue by representing the belief about the state empirically, via an ensemble of \(N\) state vectors sampled from the prior distribution,
\begin{align}
    \mm{X} = \begin{bmatrix} \vv{x}^{(1)} & \vv{x}^{(2)} & \cdots & \vv{x}^{(N)} \end{bmatrix},
\end{align}
and working with the empirical measure $\ELaw[\mm{X}]\approx \Law$.

Gaussian measures are specified entirely by their first two moments, so we aim to construct empirical measures which match the desired target in terms of these moments.
For convenience, we introduce notation for, respectively, matrix mean and deviations,
\begin{align}
    \mmmean{X} \coloneq \frac{1}{N}\sum_{i=1}^N \vv{x}^{(i)}
    \quad\quad
    \mmdev{X} \coloneq \frac{1}{\sqrt{N-1}} \Bigl( \mm{X} - \mmmean{X}\,\vv{1}^\top \Bigr)
     \label{eq:deviation_matrix}
\end{align}
where \(\vv{1}^\top\) is a row vector of \(N\) ones.
The ensemble mean~\eqref{eq:ensemble_mean} and covariance~\eqref{eq:ensemble-covariance} are computed from the empirical measure,
\begin{align}
    \widehat{\Ex}[\vrv{x}]\coloneq \Ex_{\vrv{x}\sim \ELaw[\mm{X}]}[ \vrv{x}] =\widehat{\vv{m}}_{\vrv{x}}
&=\mmmean{X}\label{eq:ensemble_mean} \\
\widehat{\var}(\vrv{x}) \coloneq\var_{\vrv{x}\sim \ELaw[\mm{X}']} (\vrv{x})=\widehat{\mm{C}}_{\vrv{xx}} &\coloneq \frac{1}{N-1} \sum_{i=1}^{N} \left(\vv{x}^{(i)} - \widehat{\vv{m}}_{\vrv{x}}\right)\left(\vv{x}^{(i)} - \widehat{\vv{m}}_{\vrv{x}}\right)^\top  \\
&=\mmdev{X} \mmdev{X}^\top + (\xi^2\mm{I}_{D_{\vrv{x}}}),\label{eq:ensemble-covariance}
\end{align}
where \(\xi^2\) is a scalar constant that introduces a regularisation to the empirical covariance to ensure it is invertible.
Abusing notation, we associate a Gaussian distribution with the empirical measure
\begin{align}
\ELaw[\mm{X}] \approx \Normal(\widehat{\Ex}[\vrv{x}], \widehat{\var}(\vrv{x})) = \Normal(\mmmean{X}, \mmdev{X} \mmdev{X}^\top + \xi^2\mm{I}_{D_{\vrv{x}}}).
\end{align}
For posterior inference about $\vrv{x}$ given $\vrv{y}=\vv{y}^*$, we seek an ensemble $\mm{X}'$ such that~\footnote{see~\cite{LeGland2011Large,Mandel2011Convergence,Kelly2014Wellposedness,Kwiatkowski2015Convergence,DelMoral2017Stability} for finite-ensemble convergence}
\begin{align}
    \Ex_{\vrv{x}\sim \ELaw[\mm{X}']}[ \vrv{x}] &\approx \Ex_{\vrv{x}\sim (\Law[\vrv{x} \gvn \vrv{y}=\vv{y}^*])} [\vrv{x}]\\
    \var_{\vrv{x}\sim \ELaw[\mm{X}']} (\vrv{x}) &\approx \var_{\vrv{x}\sim (\Law[\vrv{x} \gvn \vrv{y}=\vv{y}^*])} (\vrv{x})
\end{align}
We overload the observation operator to apply to ensemble matrices, writing
\begin{align}
    \mm{Y}\coloneq\op{H}\mm{X} &\coloneq \begin{bmatrix}\op{H}(\vv{x}^{(1)}) & \op{H}(\vv{x}^{(2)})& \dots& \op{H}(\vv{x}^{(N)})\end{bmatrix}.
\end{align}
This also induces an empirical joint
\begin{align}
    \ELaw[{\left[\begin{smallmatrix}
        \mm{X}\\
        \mm{Y}
    \end{smallmatrix}\right]}] &\coloneq \Normal\left(\begin{bmatrix}
        \mmmean{X}\\
        \mmmean{Y}
    \end{bmatrix},
    \begin{bmatrix}
        \mmdev{X} \mmdev{X}^\top + \xi^2\mm{I}_{D_{\vrv{x}}} & \mmdev{X} \mmdev{Y}^\top \\
        \mmdev{Y} \mmdev{X}^\top  & \mmdev{Y} \mmdev{Y}^\top + \upsilon^2\mm{I}_{D_{\vrv{y}}}
    \end{bmatrix}
    \right).\label{eq:ensemble-joint}
\end{align}
Here the regularisation term \(\upsilon^2\mm{I}_{D_{\vrv{y}}}\) is introduced to ensure the empirical covariance is invertible, given that the ensemble of observations is typically rank deficient when \(D_{\vrv{y}}>N\).

The Kalman gain in the ensemble setting is constructed by plugging in the empirical ensemble estimates \eqref{eq:ensemble-joint} to \eqref{eq:kalman-gain} obtaining
\begin{align}
\widehat{\mm{K}}= \widehat{\mm{C}}_{\vrv{xx}}\, \op{H}^\top \left(\op{H}\,\widehat{\mm{C}}_{\vrv{xx}}\,\op{H}^\top + \upsilon^2\mm{I}_{D_{\vrv{y}}} + \rho^2\mm{I}_{D_{\vrv{y}}}\right)^{-1}\label{eq:ensemble-kalman-gain}
\end{align}
where \(\rho^2\mm{I}_{D_{\vrv{y}}}\) is the observation error covariance matrix.
The term
\(\upsilon^2\mm{I}_{D_{\vrv{y}}}\) comes from the regularization of the empirical observation covariance (as defined by the ensemble of
\(\mm{Y}\)), while \(\rho^2\mm{I}_{D_{\vrv{y}}}\) represents the known covariance of the observation noise.
Combining these two gives the effective covariance in the observation space.\footnote{The EnKF extends to nonlinear observation operators, though Gaussian assumptions no longer hold exactly~\citep{Evensen2009Data}.}

For compactness we define \(\gamma^2 \coloneq \upsilon^2 + \rho^2\) so that the combined covariance is \(\mm{C}_{\vrv{yy}} = \mmdev{Y} \mmdev{Y}^\top + \gamma^2\mm{I}_{D_{\vrv{y}}}\).
We also define
$\mm{Y}^* = \vv{y}^* \mathbf{1}^\top,$
so that each column of \(\mm{Y}^*\) equals the observation \(\vv{y}^*\). Then, the analysis update for the ensemble is
\begin{align}
    \mm{X}' &= \mm{X} + \widehat{\mm{K}} \left(\mm{Y}^* - \op{H}\mm{X}\right).\label{eq:ensemble-update}
\end{align}
i.e. each ensemble member is updated
\(\vv{x}^{(i)}{}' \gets \vv{x}^{(i)} + \widehat{\mm{K}} \left(\vv{y}^* - \op{H}\,\vv{x}^{(i)}\right).\)
Equating moments, we see that
\(\ELaw[\vrv{x}\sim \mm{X}']
\approx \Law[\vrv{x} \gvn \vrv{y}=\vv{y}^*] \) as desired.
That is, the EnKF analysis equations \eqref{eq:ensemble-kalman-gain} and \eqref{eq:ensemble-update} can be justified in terms of an empirical approximation to the Gaussian posterior update.

\section{Core result}
\begin{proposition}
    The Empirical Matheron Update is equivalent to Ensemble Kalman Update
\end{proposition}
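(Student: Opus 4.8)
The plan is to show that the two rules are \emph{the same affine map}, applied column by column to the prior ensemble $\mm{X}$, so that they return the same analysis ensemble $\mm{X}'$ and hence the same empirical measure $\ELaw[\mm{X}']$. First I would pin down what ``empirical Matheron update'' means: it is the population Matheron rule \eqref{eq:matheron-update} for the fitted Gaussian surrogate attached to the empirical joint \eqref{eq:ensemble-joint}. Concretely, substituting the empirical moments $\widehat{\mm{C}}_{\vrv{xy}} = \mmdev{X}\mmdev{Y}^\top$, $\widehat{\mm{C}}_{\vrv{yy}} = \mmdev{Y}\mmdev{Y}^\top + \gamma^2\mm{I}_{D_{\vrv{y}}}$ and the sampled predicted observations $\vv{y}^{(i)} = \op{H}\vv{x}^{(i)}$ into \eqref{eq:matheron-update} gives the column-wise rule $\vv{x}^{(i)}{}' = \vv{x}^{(i)} + \widehat{\mm{C}}_{\vrv{xy}}\widehat{\mm{C}}_{\vrv{yy}}^{-1}(\vv{y}^* - \vv{y}^{(i)})$, i.e.\ in matrix form $\mm{X}' = \mm{X} + \widehat{\mm{C}}_{\vrv{xy}}\widehat{\mm{C}}_{\vrv{yy}}^{-1}(\mm{Y}^* - \op{H}\mm{X})$. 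This already has the shape of the EnKF analysis \eqref{eq:ensemble-update}; what remains is to identify the gains.

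Next I would prove the matrix identity $\widehat{\mm{C}}_{\vrv{xy}}\widehat{\mm{C}}_{\vrv{yy}}^{-1} = \widehat{\mm{K}}$ with $\widehat{\mm{K}}$ as in \eqref{eq:ensemble-kalman-gain}. For a linear operator $\op{H} = \mm{H}$ the deviations satisfy $\mmdev{Y} = \mm{H}\mmdev{X}$, hence $\widehat{\mm{C}}_{\vrv{xy}} = \mmdev{X}\mmdev{X}^\top\mm{H}^\top = \widehat{\mm{C}}_{\vrv{xx}}\mm{H}^\top$ and $\widehat{\mm{C}}_{\vrv{yy}} = \mm{H}\,\mmdev{X}\mmdev{X}^\top\mm{H}^\top + \gamma^2\mm{I}_{D_{\vrv{y}}} = \mm{H}\widehat{\mm{C}}_{\vrv{xx}}\mm{H}^\top + \gamma^2\mm{I}_{D_{\vrv{y}}}$, where the state regulariser $\xi^2$ is either set to zero or carried identically through both sides; plugging these in reproduces \eqref{eq:ensemble-kalman-gain} verbatim. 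Because the residuals $\vv{y}^* - \op{H}\vv{x}^{(i)}$ are identical in the two procedures, so are the updated columns, which is the claim. I would close by observing that the moment computation inside the proof of the Matheron lemma, together with the moment-matching remark following \eqref{eq:ensemble-update}, shows both analysis ensembles carry the first two moments of $\Law[\vrv{x}\gvn\vrv{y}=\vv{y}^*]$, so the equivalence is substantive rather than a coincidence of symbols.

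The main obstacle is the bookkeeping of observation noise and the two regularisers. The stochastic EnKF perturbs the predicted observations, $\vv{y}^{(i)} = \op{H}\vv{x}^{(i)} + \vv{\varepsilon}^{(i)}$ with $\vv{\varepsilon}^{(i)}\sim\Normal(0,\rho^2\mm{I}_{D_{\vrv{y}}})$, while the ``pathwise'' empirical Matheron draw carries that noise inside the joint law, and the excerpt's \eqref{eq:ensemble-update} instead folds it into the gain through $\gamma^2 = \upsilon^2 + \rho^2$. I would therefore state the identity at the level of the per-column affine map --- giving identical realisations in the noise-free-prediction convention --- and note that under the perturbed-observation convention the two ensembles agree in distribution rather than sample-by-sample, since both inject the same Gaussian noise model; keeping $\widehat{\mm{C}}_{\vrv{xy}}$ regulariser-free and merging $\upsilon^2$ and $\rho^2$ into a single $\gamma^2$ is exactly what makes the two gains coincide.
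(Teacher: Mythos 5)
Your proof is correct and follows essentially the same route as the paper: substitute the empirical ensemble moments from \eqref{eq:ensemble-joint} into the Matheron map \eqref{eq:matheron-update} and observe that, column by column, it reproduces the EnKF analysis \eqref{eq:ensemble-update}. You go a little further than the paper's one-line substitution by verifying the gain identity through $\mmdev{Y}=\mm{H}\mmdev{X}$ and by flagging the $\xi^2$-regulariser and perturbed-observation bookkeeping that the paper leaves implicit—this is a tightening of the same argument rather than a different one.
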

\begin{proof}
Under the substitution
\begin{align}
    \vv{m}_{\vrv{x}} &\to \mmmean{X}, & \mm{C}_{\vrv{xx}} &\to \mmdev{X} \mmdev{X}^\top + \xi^2\mm{I}_{D_{\vrv{x}}},\\
    \vv{m}_{\vrv{y}} &\to \mmmean{Y}, & \mm{C}_{\vrv{yy}} &\to \mmdev{Y} \mmdev{Y}^\top + \gamma^2\mm{I}_{D_{\vrv{y}}},\\
    \mm{C}_{\vrv{xy}} &\to \mmdev{X} \mmdev{Y}^\top, & \mm{C}_{\vrv{yx}} &\to \mmdev{Y} \mmdev{X}^\top,
\end{align}
the Matheron update~\eqref{eq:matheron-update} becomes identical to the ensemble update~\eqref{eq:ensemble-update}.
\end{proof}

\section{Computational Complexity}
The EnKF avoids computing full covariance matrices, instead using empirical ensemble statistics with cost $\mathcal{O}(D_{\vrv{y}}N^2 + N^3 + D_{\vrv{x}}D_{\vrv{y}}N)$. Since ensemble size $N$ is typically much smaller than dimensions $D_{\vrv{x}}, D_{\vrv{y}}$, this is significantly more efficient than the naive Kalman update's $\mathcal{O}(D_{\vrv{y}}^3)$ cost.

\section{Capabilities Unlocked by the Equivalence}\label{sec:capabilities}

\paragraph{(i) Linear-time GP sample paths.}
EnKF algebra lets us compute the Matheron update with $\mathcal O(N^2d)$ cost—or $\mathcal O(Nd)$ with localisation—rather than $\mathcal O(d^{3})$ Cholesky factorizations, enabling multi-million-point GP draws~\citep{Wilson2020Efficiently,Hunt2007Efficient}.

\paragraph{(ii) Streaming and hyper-parameter learning.}
Joint-state augmentation in EnKF permits on-line estimation of GP length-scales and noise variances~\citep{Kuzin2018Ensemble}, giving real-time Bayesian regression as data arrive.

\paragraph{(iii) Differentiable data-assimilation layers.}
Because the affine map is a re-parameterisation, the complete update is compatible with auto-differentiation.  One can embed an EnKF/Matheron block inside a neural network and back-propagate end-to-end, an approach already explored in differentiable simulators~\citep{Spantini2022Coupling,Schillings2017Analysis}.

\paragraph{(iv) Sparse covariance structure via localisation.}
Local ensemble transform Kalman filters (LETKF) taper covariances in physical space, yielding block-banded precisions that transfer verbatim to spatial or graph-based GPs~\citep{Hunt2007Efficient}.

\paragraph{(v) Rigorous error bounds.}
Inverse-problem analyses prove contraction and bias properties of finite-ensemble EnKF~\citep{Schillings2017Analysis}; those proofs now bound the error of path-wise GP samplers for free.

\paragraph{Take-away.}
Treating EnKF analysis as an empirical Matheron  brings mature geophysical tool-kits—localisation, adaptive inflation, square-root variants—into scalable machine-learning while exporting automatic differentiation and sparse GP tricks back to data assimilation.

\section{Numerical Illustration:  1D Kriging by Ensemble Kalman filer}
\label{sec:numerical-demo}

We revisit the classic 1D kriging problem to demonstrate both the direct equivalence of the EnKF update to GP regression and the power of extending this link. We compare three solvers side-by-side: a standard GP implementation, its equivalent EnKF formulation, and the Local Ensemble Transform Kalman Filter (LETKF)~\citep{Bocquet2020Online,Hunt2007Efficient}, a popular data assimilation variant that introduces covariance localization. This illustrates how mature geophysical tools can be immediately repurposed for scalable GP inference. Code is available in the supplement.
We note that the Ensemble Kalman solvers used here are chosen for ease of implementation rather than efficiency of performance, and so we do not expect to see massive performance gains on these small problems.
\begin{quote}
\textbf{Task.}  Recover a latent field
$x\!\in\!\mathbb R^{d}$ on a unit grid from
$m\!=\!d/5$ noisy point samples (scaling with dimension)
$y = Hx + \eta$, $\eta\!\sim\!\mathcal N(0,\tau^{2}I)$ with
$\tau=0.2$.  The prior is
$\mathcal N(0,K)$, $K_{ij}=\sigma^{2}\!\exp(-\lVert r_i-r_j\rVert^{2}/2\ell^{2})$
with $\sigma=1$, $\ell=0.2$.
We test grid sizes $d\in\{200,400,600,800\}$.
\end{quote}

\paragraph{Methods.}
\begin{itemize}
    \item \textbf{GP Regression:} The exact posterior computed via \texttt{GaussianProcessRegressor} from \textsc{scikit-learn}. This serves as our ground truth and scales as $\mathcal{O}(m^3)$ with observations.
    \item \textbf{EnKF:} An implementation of the standard Ensemble Kalman Filter using DAPPER~\citep{Raanes2024DAPPER}. It is the direct, empirical equivalent of the Matheron update and is expected to be computationally cheaper than the exact GP.
    \item \textbf{LETKF:} The Local Ensemble Transform Kalman Filter \citep{Hunt2007Efficient,Bocquet2020Online}, also implemented in DAPPER. This method introduces localization, where each state variable is updated using only a subset of nearby observations. This is a common technique in data assimilation for improving accuracy in high-dimensional systems.
\end{itemize}

We report \textbf{median fit and predict times} obtained with our in-process timer over multiple runs to ensure statistical robustness.

\paragraph{Accuracy.}
\figureref{fig:posterior-samples} shows posterior draws from all three methods using squared-exponential basis functions matched to the generating process (the ``true prior'' setting). We see similar RMSE for the mean estimate between the methods, with a slightly overconfident posterior uncertainty in the ensemble methods.

\paragraph{Speed.}

Figure~\ref{fig:timing-curves} reports fit (training) and predict times as we sweep (top) the number of observations $m$ at fixed state dimension $d$, and (bottom) the state dimension $d$ at fixed $m$.
The exact GP (scikit-learn) shows the expected $\mathcal O(m^3)$ fit scaling.
Its predict time grows with $d$ and $m$—mean-only scales as $\mathcal O(d\,m)$, while computing both mean and standard deviation (our default) scales as $\mathcal O(d\,m^2)$.
In contrast, EnKF and LETKF fits scale roughly linearly in $m$ for fixed ensemble size $N$ (and linearly in $d$ with localization), while prediction is linear in $d$ to read out the analysis mean (and $\mathcal O(Nd)$ if we also materialize an ensemble).
We do not vary $N$ in these plots.
Consequently, as $d$ grows large at fixed $m$, the DA methods yield flatter predict‑vs‑$d$ curves than the exact GP, and as $m$ grows at fixed $d$, DA fits avoid the cubic blow‑up of exact GP regression.

\begin{figure}
  \centering
  \includegraphics[width=\linewidth]{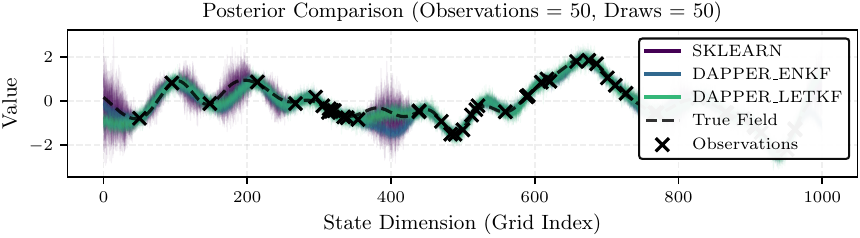}
  \caption{Posterior draws from the exact GP (sklearn), EnKF, and LETKF.}
  \label{fig:posterior-samples}
\end{figure}

\begin{figure}[t]
  \floatconts {fig:timing-curves}%
    {\caption{Fit (solid) vs.\ predict (dashed) wall times.
      Top: scaling with observations \(m\).
      Bottom: scaling with state dimension \(d\).}}%
  {%
    \includegraphics[width=\linewidth]{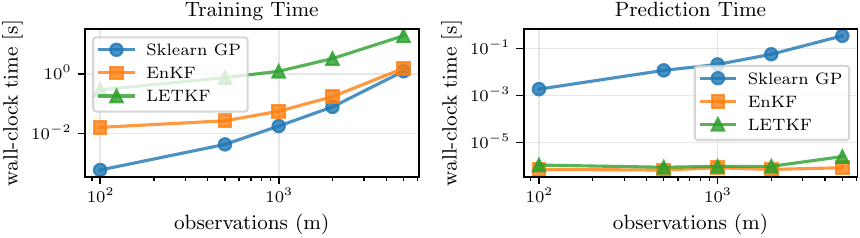}\par\vspace{1ex}
    \includegraphics[width=\linewidth]{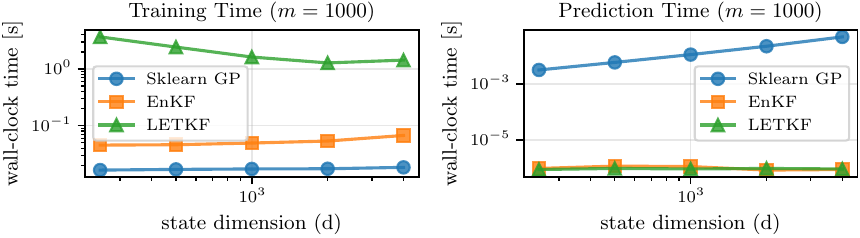}%
  }
\end{figure}

\paragraph{Take-away.}
The results confirm that a standard EnKF is a computationally equivalent method for path-wise GP sampling. More importantly, the equivalence acts as a bridge to a rich ecosystem of data assimilation techniques.
The LETKF demonstrates this: by simply changing one line of code to invoke a localized filter, we gain access to a highly scalable inference method that parallels sparse or domain-decomposition approaches in the GP literature. This opens the door for the GP community to leverage decades of mature, practical DA engineering ``for free.''

\section{Conclusion and Implications}

The affine residual identity is a Rosetta stone linking kriging, optimal-interpolation, ensemble-Kalman, and path-wise GP sampling. This synthesis upgrades each field: geoscientists inherit differentiable, variational, and sparse-GP tools; machine-learning researchers gain decades of localisation, inflation, and convergence theory; and inverse-problem analysts get a unifying algebra that bridges finite-ensemble and infinite-dimensional limits.

The literature of the EnKF is vast, and we have only scratched the surface of the many variants and extensions that have been proposed.
Many of the ideas developed for the EnKF are thus potentially applicable to Gaussian Process regression via the Matheron update.
For example, it provides a clear probabilistic interpretation of the ensemble update that may inspire improved regularization and localization strategies.

By recasting the EnKF update as an empirical Matheron update, we’ve exposed a straightforward, unifying mechanism behind two seemingly different approaches. This connection cuts through the usual complexity and suggests that many of the practical tricks developed for the EnKF—like improved regularization—could be reinterpreted and possibly enhanced when applied to Gaussian process problems. It’s a reminder that sometimes a simple change in perspective can open up entirely new avenues for improvement.

\acks{The author thanks CSIRO's Machine Learning and Artifical Intelligence Future Science Platform for supporting this research.\\
This content was drafted by human hands, with the exception of the experiments and supporting code which was generated by LLMs.}

\bibliography{refs}

\appendix

\section{Measures as densities}\label{sec:densities-please}

In many machine learning and data assimilation applications, probability measures are represented via probability density functions (pdfs) with respect to the Lebesgue measure. For a continuous random variable \(\vrv{x}\) taking values in \(\mathbb{R}^{D_{\vrv{x}}}\), we denote its density by \(p_{\vrv{x}}(\vv{x})\) so that for any measurable set \(A \subset \mathbb{R}^{D_{\vrv{x}}}\),
\begin{equation}
    \Pr\left(\vrv{x} \in A\right)
    = \int_A p_{\vrv{x}}(\vv{x})\,\mathrm{d}\vv{x}.
\end{equation}
The expectation of any measurable function \(f:\mathbb{R}^{D_{\vrv{x}}}\to\mathbb{R}\) is given by
\begin{equation}
    \Ex[f(\vrv{x})] = \int_{\mathbb{R}^{D_{\vrv{x}}}} f(\vv{x})\,p_{\vrv{x}}(\vv{x})\,\mathrm{d}\vv{x}.
\end{equation}

We can write out the results of this paper in terms of densities, but since the primary object of our interest is the empirical measure and the moments of the theoretical and empirical measures, densities are not strictly necessary and result in a rather longer exposition.
Instead, we hope to reassure machine-learners that the two are equivalent in this appendix.

\subsection*{Dirac Functionals and Empirical Measures}

In many practical settings, we work with a finite sample \(\{\vv{x}^{(i)}\}_{i=1}^N\) drawn from the true distribution \(p_{\vrv{x}}(\vv{x})\). The empirical measure associated with this sample is defined as
\begin{equation}
    \ELaw[\mm{X}](A) = \frac{1}{N} \sum_{i=1}^N \mathbf{1}_A(\vv{x}^{(i)}),
\end{equation}
where \(\mathbf{1}_A\) is the indicator function for the set \(A\). When we wish to express the empirical measure in density notation (with respect to the Lebesgue measure), we represent it as a sum of Dirac delta functions:
\begin{equation}
    p_{\text{emp}}(\vv{x}) = \frac{1}{N} \sum_{i=1}^N \delta(\vv{x}-\vv{x}^{(i)}).
\end{equation}

Note that the \emph{Dirac delta} is not a function in the classical sense but a \emph{distribution} (or generalized function). In functional analysis, a Dirac delta centered at \(\vv{x}^{(i)}\) is defined as a linear functional \(\delta_{\vv{x}^{(i)}}\) on a space of test functions (typically smooth functions with compact support) such that
\begin{equation}
    \langle \delta_{\vv{x}^{(i)}}, f \rangle = f(\vv{x}^{(i)}),
\end{equation}
for any test function \(f\). Here, the angle brackets \(\langle \cdot,\cdot \rangle\) denote the action of the distribution on \(f\). This \emph{sifting property} ensures that when integrating a test function against the empirical density, we recover the sample average:
\begin{equation}
    \int_{\mathbb{R}^{D_{\vrv{x}}}} f(\vv{x})\,p_{\text{emp}}(\vv{x})\,\mathrm{d}\vv{x}
    = \frac{1}{N} \sum_{i=1}^N \langle \delta_{\vv{x}^{(i)}}, f \rangle
    = \frac{1}{N} \sum_{i=1}^N f(\vv{x}^{(i)}).
\end{equation}

In this sense, the empirical measure is exactly the sum of Dirac functionals centered at each sample point, and the “density” \(p_{\text{emp}}\) should be understood in the distributional sense. This formulation is particularly useful when comparing the empirical measure to the true measure \(p_{\vrv{x}}\). Although the empirical measure is singular with respect to the Lebesgue measure (since it concentrates mass on a finite set of points), many operations (such as taking expectations of smooth functions) remain well-defined.

\subsection*{From Measures to Densities in Gaussian Conditioning}

For example, suppose the true prior for \(\vrv{x}\) is given by a density \(p_{\vrv{x}}(\vv{x})\) (say, a Gaussian)
\[
    p_{\vrv{x}}(\vv{x}) = \frac{1}{(2\pi)^{D_{\vrv{x}}/2} \,|\mm{C}_{\vrv{xx}}|^{1/2}} \exp\!\left(-\tfrac{1}{2}(\vv{x}-\vv{m}_{\vrv{x}})^\top \mm{C}_{\vrv{xx}}^{-1} (\vv{x}-\vv{m}_{\vrv{x}})\right),
\]
and the likelihood \(p_{\vrv{y}|\vrv{x}}(\vv{y}|\vv{x})\) is similarly specified. Then the joint density is
\begin{equation}
    p_{\vrv{x},\vrv{y}}(\vv{x},\vv{y})
    = p_{\vrv{x}}(\vv{x})\,p_{\vrv{y}|\vrv{x}}(\vv{y}|\vv{x}),
\end{equation}
and the conditional density is given by
\begin{equation}
    p_{\vrv{x}|\vrv{y}}(\vv{x}|\vv{y}^*)
    = \frac{p_{\vrv{x}}(\vv{x})\,p_{\vrv{y}|\vrv{x}}(\vv{y}^*|\vv{x})}{\int_{\mathbb{R}^{D_{\vrv{x}}}} p_{\vrv{x}}(\vv{x}')\,p_{\vrv{y}|\vrv{x}}(\vv{y}^*|\vv{x}')\,\mathrm{d}\vv{x}'}.
\end{equation}

In our work, while we initially denote measures by \(\Law[\cdot]\) (which emphasizes their abstract nature), one can equivalently work with the densities described above.
Thus, updating the ensemble according to the empirical version of the Kalman or Matheron update can be seen as operating directly on these Dirac-based representations.
In practical implementations, one does not manipulate the Dirac deltas directly; instead, one works with the sample values and their empirical moments, which—as shown earlier—are equivalent (in the limit of large \(N\)) to the full probabilistic update based on the density.

This density-based view, incorporating Dirac functionals to represent empirical measures, is particularly common in machine learning, where it provides an intuitive bridge between abstract measure-theoretic probability and the concrete computations performed with finite samples.

The need to throw around integrals, introduce and worry about density normalisation and so on, we find a little distracting and thus it is not used in the main text.

\end{document}